  \providecommand\BibTeX{{%
    \normalfont B\kern-0.5em{\scshape i\kern-0.25em b}\kern-0.8em\TeX}}}
\def\eqref#1{equation~\ref{#1}}
\def\1{\bm{1}}
\def\rve{{\mathbf{e}}}
\def\rvr{{\mathbf{r}}}
\DeclareMathAlphabet{\mathsfit}{\encodingdefault}{\sfdefault}{m}{sl}
\SetMathAlphabet{\mathsfit}{bold}{\encodingdefault}{\sfdefault}{bx}{n}
\def\gE{{\mathcal{E}}}
\def\gG{{\mathcal{G}}}
\def\gM{{\mathcal{M}}}
\def\gR{{\mathcal{R}}}
\def\gS{{\mathcal{S}}}
\def\gT{{\mathcal{T}}}
\def\sA{{\mathbb{A}}}
\newcommand{\E}{\mathbb{E}}
\newtheorem{definition}{Definition}
\newtheorem{proposition}{Proposition}
\newtheorem{propositionEnd}{Proposition}
\newtheorem{example}{Example}
\begin{document}
\title{Principled Representation Learning for Entity Alignment}

\author{
	Lingbing Guo$^{1}$, Zequn Sun$^{2}$, Mingyang Chen$^{1}$, Wei Hu$^{2}$, Qiang Zhang$^{1}$, Huajun Chen$^{1}$
}
\affiliation{
	\institution{$^1$ College of Computer Science and Technology, Zhejiang University, Hangzhou, China} \country{}
}

\affiliation{
	\institution{$^2$ State Key Laboratory for Novel Software Technology, Nanjing University, Nanjing, China} \country{}
}

\email{
	{lbguo, mingyangchen, qiang.zhang.cs, huajunsir}@zju.edu.cn
}
\email{
	zqsun.nju@gmail.com, whu@nju.edu.cn
}

\begin{abstract}
Embedding-based entity alignment (EEA) has recently received great attention. Despite significant performance improvement, few efforts have been paid to facilitate understanding of EEA methods. Most existing studies rest on the assumption that a small number of pre-aligned entities can serve as anchors connecting the embedding spaces of two KGs. Nevertheless, no one investigates the rationality of such an assumption. To fill the research gap, we define a typical paradigm abstracted from existing EEA methods and analyze how the embedding discrepancy between two potentially aligned entities is implicitly bounded by a predefined margin in the scoring function. Further, we find that such a bound cannot guarantee to be tight enough for alignment learning. We mitigate this problem by proposing a new approach, named NeoEA, to explicitly learn KG-invariant and principled entity embeddings. In this sense, an EEA model not only pursues the closeness of aligned entities based on geometric distance, but also aligns the neural ontologies of two KGs by eliminating the discrepancy in embedding distribution and underlying ontology knowledge. Our experiments demonstrate consistent and significant improvement in performance against the best-performing EEA methods.
\end{abstract}

\begin{CCSXML}
<ccs2012>
   <concept>
       <concept_id>10002951.10003227.10003351</concept_id>
       <concept_desc>Information systems~Data mining</concept_desc>
       <concept_significance>500</concept_significance>
       </concept>
   <concept>
       <concept_id>10010147.10010178.10010187</concept_id>
       <concept_desc>Computing methodologies~Knowledge representation and reasoning</concept_desc>
       <concept_significance>500</concept_significance>
       </concept>
   <concept>
       <concept_id>10010147.10010257.10010293</concept_id>
       <concept_desc>Computing methodologies~Machine learning approaches</concept_desc>
       <concept_significance>500</concept_significance>
       </concept>
 </ccs2012>
\end{CCSXML}

\ccsdesc[500]{Information systems~Data mining}
\ccsdesc[500]{Computing methodologies~Knowledge representation and reasoning}
\ccsdesc[500]{Computing methodologies~Machine learning approaches}

\keywords{knowledge graph embedding, embedding-based entity alignment, optimal transport}
\maketitle

\section{Introduction}
\label{sec:intro}
Knowledge graphs (KGs), such as DBpedia~\cite{DBpedia} and Wikidata~\cite{Wikidata}, have become crucial resources for many AI applications. Although a large-scale KG offers structured knowledge derived from millions of facts in the real world, it is still incomplete by nature, and the downstream applications are always demanding more knowledge. Entity alignment (EA) is then proposed to solve this issue, which exploits the potentially aligned entities among different KGs to facilitate knowledge fusion and exchange.

Recently, embedding-based entity alignment (EEA) methods~\cite{MTransE,JAPE,IPTransE,GCN-Align,RSN,AVR-GCN,RDGCN,AliNet,EA_iclr} have been prevailing in the EA area. The central idea is to encode entity/relation semantics into embeddings and estimate entity similarity based on their embedding distance. These methods either learn an alignment function $f_a$ to minimize the distance between the embeddings of a pair of aligned entities \cite{GCN-Align}, or directly map these two entities to one vector representation~\cite{JAPE}. Meanwhile, they leverage a shared scoring function $f_s$ to encode semantics into representations, such that two potentially aligned entities 
shall have similar feature expression. During this process, a small number of aligned entity pairs (a.k.a., seed alignment) are required as supervision data to align (or merge) the embedding spaces of two KGs.

Different from the conventional laborious methods~\cite{PARIS,ALEX,SiGMa} which manually collect and select discriminative features, EEA ones rarely rely on third-party tools or pipeline for preprocessing. Commonly, they just need triples or adjacency matrices of KGs as the input data, and have achieved comparable or better performance on many benchmarks~\cite{JAPE,BootEA,OpenEA}.
\begin{figure}[t]
	\centering
	\includegraphics[width=.8\linewidth]{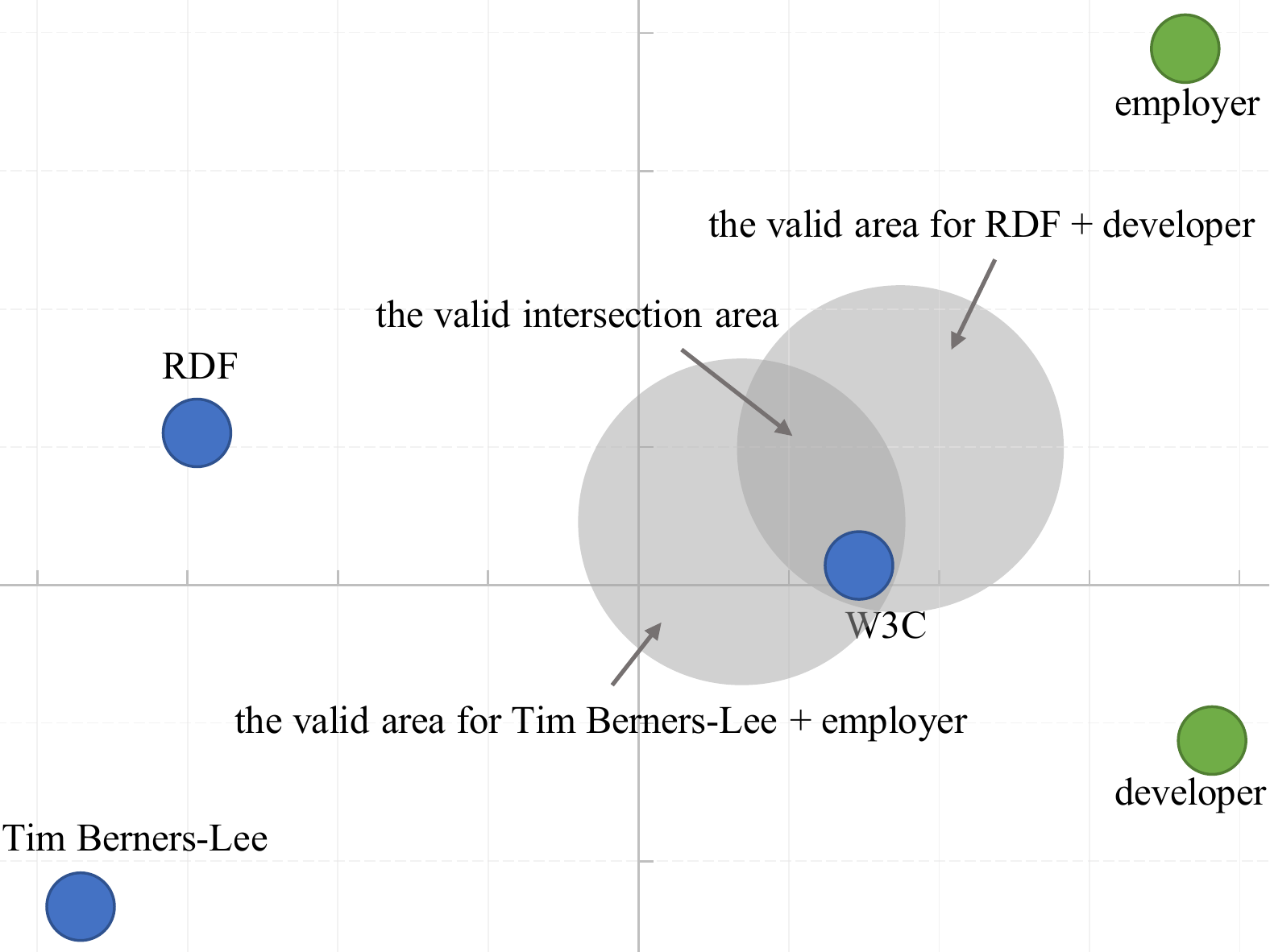}
	\caption{Illustration of margin-based score function. Blue and green nodes denote entity embeddings and relation embeddings, respectively. The center of each grey circle is assigned according to the head entity and relation in the triple. The radii are exactly the margin $\lambda$. }
	\label{fig:margin}
\end{figure}

Another strength of EEA is that it suffers slightly from the heterogeneity of two KGs. The relationships among entities are interpreted by the score function $f_s$ and manifested as distances in the embedding space. Hence, the similarity between a pair of entities can be defined and estimated smoothly. Without loss of generality, we consider the score function of TransE \cite{TransE} as $f_s$. It describes a triple $(e_i, r, e_j)$ by $\rve_i + \rvr \approx \rve_j$, where $e_i, r, e_j$ are the head entity, relation, and tail entity, respectively. The approximation is achieved by defining a margin $\lambda$ to ensure:
\begin{equation}
	|| \rve_i + \rvr - \rve_j || \leq \lambda,
\end{equation}
where $||\cdot||$ denotes the L1 or L2 distance. We illustrate this concept in Figure \ref{fig:margin}, where the two triples \textit{(Tim Berners-Lee, employer, W3C)} and \textit{(RDF, developer, W3C)} have the same tail entity \textit{W3C}. The valid area for \textit{W3C} is decided by two circles. Their centers are $\textit{Tim Berners-Lee} + \textit{employer}$ and $\textit{RDF} + \textit{developer}$, respectively. The radii are exactly the margin $\lambda$. Therefore, the desired embedding of \textit{W3C} should be located in the intersection area.

Many existing EEA methods \cite{MTransE,JAPE,BootEA} have explored how to choose a proper $\lambda$ for the entity alignment task, but we argue this goes beyond a mere parameter-tuning problem. In this paper, we define a paradigm leveraged by the current methods. We show that the embedding discrepancy of an underlying aligned entity pair is bounded by $\epsilon \propto \lambda$, for most EEA methods~\cite{MTransE,JAPE,IPTransE,BootEA,SEA}, or allowing more divergence between two potentially aligned entities~\cite{GCN-Align,RSN,RDGCN,AVR-GCN,AliNet}. Further, we find that this margin-based bound cannot be set as tight as expected, causing minimal constraints on the entities with few neighbors. Take \textit{W3C} in Figure \ref{fig:margin} as an example. The valid area will shrink and finally disappear if this entity has more and more linked neighbors. There is only one way to mitigate this problem -- enlarging the radii, which will allow more divergence for entities with a few neighbors.

We consider additional constraints on entity embeddings to mitigate the above problem, which we name neural-ontology-driven entity alignment (abbr., NeoEA). An ontology~\cite{OWL-ex,OWL2,OWL2-EL} is usually comprised of axioms that define the legitimate relationships among entities and relations. Those axioms make a KG \emph{principled} (i.e., constrained by rules). For example, an ``Object Property Domain'' axiom in OWL 2 EL \cite{OWL2} claims the valid head entities for a specific relation (e.g., the head entities of relation ``birthPlace'' should be in class ``Person''), and it thus determines the head entity distributions of this relation. The neural ontology in this paper is reversely deduced from the entity embedding distributions, which is clearly different from the existing methods like OWL2Vec* \cite{OWL2Vec} that leverages external ontology data to improve KG embeddings. We expect to align the high-level neural ontologies to diminish the discrepancy of entity embedding distributions and ontology knowledge between two KGs.

The main contributions of this paper are threefold:
\begin{itemize}
	\item We define a paradigm for the current EEA methods, and demonstrate that the margin $\lambda$ in their scoring function $f_s$ implicitly bounds the embedding discrepancy in each potential alignment pair. We show that this margin-based bound cannot be as tight as we expect.
	
	\item We propose NeoEA to learn \emph{KG-invariant} and \emph{principled} representations by aligning the neural axioms of two KGs. We prove that minimizing the difference can substantially align their corresponding ontology-level knowledge without assuming the existence of real ontology data.
	
	\item We conducted experiments to verify the effectiveness of NeoEA with several state-of-the-art methods as baselines. We show that NeoEA can consistently and significantly improve their performance.
\end{itemize}

\section{Background}
\label{sec:EEA}

\begin{figure*}[t]
	\centering
	\includegraphics[width=.95\linewidth]{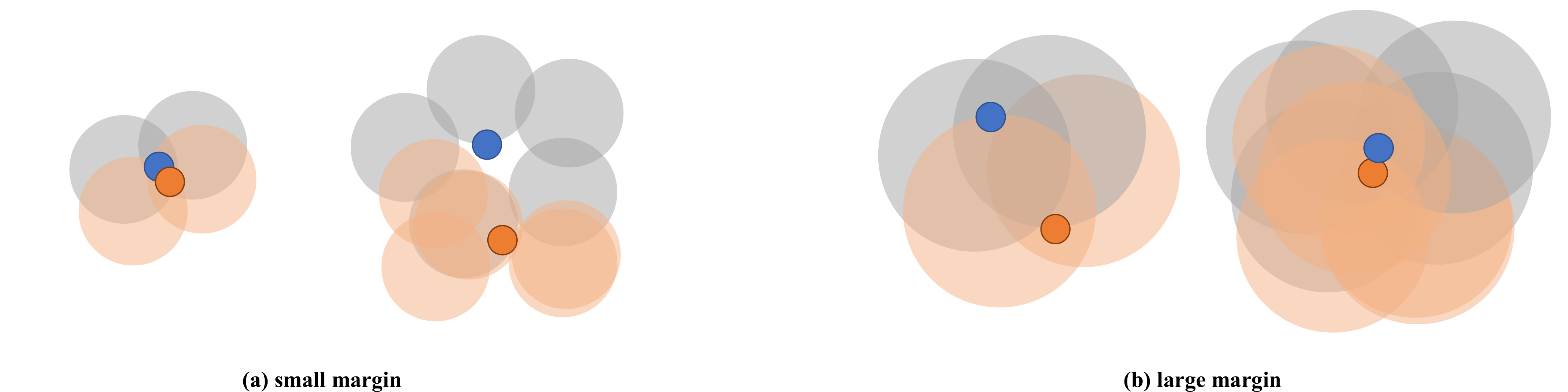}
	\caption{Influence of margin $\lambda$ to different entities in EEA. Blue and orange nodes denote the entities in $\gE_1$ and $\gE_2$, respectively. }
	\label{fig:margin_ea}
\end{figure*}

\subsection{Embedding-based Entity Alignment}

We start by defining a typical KG $\gG = (\gE, \gR, \gT)$, where $\gE$ and $\gR$ are the entity and relation sets respectively, and $\gT \subseteq \gE \times \gR \times \gE$ is the triple set. A triple $(e_1, r, e_2)$ comprises three elements, i.e., the head entity $e_1$,  the relation $r$, and the tail entity $e_2$. We use the boldface $\rve_1$ to denote the embedding of the entity $e_1$.

The common paradigm employed by most existing EEA methods~\cite{MTransE,JAPE,IPTransE,BootEA,SEA} is then defined as:

\begin{definition}[Embedding-based Entity Alignment]
	The input of EEA is two KGs $\gG_1 = (\gE_1, \gR_1, \gT_1)$, $\gG_2 = (\gE_2, \gR_2, \gT_2)$, and a small subset of aligned entity pairs $\mathcal{S} \subset \mathcal{E}_1 \times \mathcal{E}_2$ as seeds to connect $\gG_1$ with $\gG_2$. An EEA model consists of two neural functions: an alignment function $f_a$, which is used to regularize the embeddings of pairwise entities in $\gS$; and a scoring function $f_s$, which scores the embeddings based on the joint triple set $\gT_1\cup \gT_2$. EEA estimates the alignment of an arbitrary entity pair $(e_i^1, e_j^2)$ by their geometric distance $d(\rve_i^1,  \rve_j^2)$.
\end{definition}

The existing studies have explored a diversity of $f_a$. The pioneering work MTransE~\cite{MTransE} was proposed to learn a mapping matrix to cast an entity embedding $\rve_i^1$ to the vector space of $\gG_2$. SEA~\cite{SEA} and OTEA~\cite{OTEA} extended this approach with adversarial training to learn the projection matrix. Especially, OTEA is highly related to our approach as it is also based on optimal transport (OT)~\cite{wGAN}. The differences are: (1) NeoEA provides a general way to align entity embedding distributions, while OTEA regularizes the projection matrix via optimal transport. (2) NeoEA exploits the underlying ontology information to facilitate entity alignment.

Recently, a simpler yet more efficient method was widely-used, which directly maps a pair of aligned entities $(e_i^1, e_i^2) \in \gS$ to one embedding vector $\rve_i$~\cite{JAPE,IPTransE,RSN}. Meanwhile, some methods \cite{GCN-Align,SEA,RDGCN} started to leverage a softer way to incorporate seed information, in which the distance between entities in a positive pair (i.e., known alignment in $\mathcal{S}$) is minimized, while that referred to a negative pair is enlarged. As the most efficient choice, we consider $f_a$ as Euclidean distance between two embeddings \cite{AliNet,RSN,GCN-Align,SEA,RDGCN}. The corresponding alignment loss can be written as follows:
\begin{equation}
\begin{split}
\label{eq:reg}
\mathcal{L}_a = &\sum_{(e_i^1,e_i^2) \in \mathcal{S}}||\rve_i^1 - \rve_i^2|| + 
\sum_{(e_{i'}^1,e_{j'}^2) \in \mathcal{S}^{-}}\mathit{ReLU}(\alpha - || \rve_{i'}^1 - \rve_{j'}^2||),
\end{split}
\end{equation}
where $\mathcal{S}^{-}$ denotes the set of negative pairs. $\alpha$ is the minimal margin allowed between entities in each negative entity pair.

On the other hand, the scoring function $f_s$ also has various design choices \cite{JAPE,RSN,GCN-Align}. Most methods \cite{MTransE,JAPE,SEA,OTEA,BootEA} choose TransE as their scoring function, i.e., $f_s(\tau) = f_s((e_i, r, e_j)) = ||\rve_i + \rvr - \rve_j||, \tau=(e_i, r, e_j)\in \gT_1 \cup \gT_2$. The corresponding loss is:
\begin{equation}
\begin{split}
\label{eq:score}
\mathcal{L}_s = & \sum_{\tau \in \gT_1 \cup \gT_2} \mathit{ReLU}(f_s(\tau)-\lambda) + 
\sum_{\tau' \in \gT_1^- \cup \gT_2^-} \mathit{ReLU}(\lambda - f_s(\tau')),
\end{split}
\end{equation}
where $\gT_1^-$ and $\gT_2^-$ are negative triple sets. $\mathcal{L}_s$ is a margin-based loss in which the distance $d(\rve_i + \rvr, \rve_j)$ in a positive triple should at least be smaller than $\lambda \geq 0$, while larger than $\lambda$ for negative ones. 

\subsection{Understanding and Rethinking EEA}

We illustrate how an EEA method works with Figure \ref{fig:margin_ea}. When the KG embedding model has a small margin, as shown in the left of Figure \ref{fig:margin_ea}a, the entities with few neighbors can be constrained tightly. With aligned entity pairs serving as anchors, the circles are very close to each other. Therefore, two entities stay closely in the overlapped intersection areas. By contrast, there is no valid area for the entities with rich neighbors. The entities in the right of Figure \ref{fig:margin_ea}a are not ``fully expressed'' \cite{SimplE,ComplEx}.

If we enlarge the margin, as shown in Figure \ref{fig:margin_ea}b, the embeddings for entities with rich neighbors can be correctly assigned. However, the intersection areas for entities with few neighbors are too loose to bound the underlying aligned entities. The two embeddings are not as similar as we expect in the vector space. We summarize the above observations as:

\begin{proposition}[Discrepancy Bound]
	\label{proposition:bound}
	The embedding difference of two potentially aligned entities $(e_x^1, e_y^2)$ is bound by $\epsilon$, which is proportional to the hyper-parameter $\lambda$:
	\begin{align}
		\exists \epsilon \propto \lambda, ||\rve_x^1 - \rve_y^2|| \leq \epsilon.
	\end{align}
\end{proposition}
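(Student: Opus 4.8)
The plan is to exploit the structural meaning of ``potentially aligned'': by the paradigm of the Definition above, $e_x^1$ and $e_y^2$ can be tied to the seed set $\gS$ through corresponding triples. Concretely, I would assume there is an anchor pair $(e_a^1, e_a^2) \in \gS$ and a relation $r$ (shared, or aligned so that $\rvr^1 \approx \rvr^2$) with $(e_a^1, r, e_x^1) \in \gT_1$ and $(e_a^2, r, e_y^2) \in \gT_2$. This is exactly the one-hop correspondence that $f_s$ sees on the joint triple set $\gT_1 \cup \gT_2$ and that makes $(e_x^1, e_y^2)$ a candidate alignment.

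First I would read off the TransE constraint at a minimizer of the scoring loss $\mathcal{L}_s$. Since both triples are positive, the margin bound $||\rve_i + \rvr - \rve_j|| \le \lambda$ gives $||\rve_a^1 + \rvr - \rve_x^1|| \le \lambda$ and $||\rve_a^2 + \rvr - \rve_y^2|| \le \lambda$. Next I would use the alignment loss $\mathcal{L}_a$: because $(e_a^1, e_a^2) \in \gS$, its first summand drives the anchor gap to some small $||\rve_a^1 - \rve_a^2|| \le \delta$, with $\delta \to 0$ as that term is minimized.

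The core step is a single triangle inequality. Writing the telescoping identity $\rve_x^1 - \rve_y^2 = (\rve_x^1 - \rve_a^1 - \rvr) - (\rve_y^2 - \rve_a^2 - \rvr) + (\rve_a^1 - \rve_a^2)$ and bounding the three summands by $\lambda$, $\lambda$, and $\delta$ respectively yields $||\rve_x^1 - \rve_y^2|| \le 2\lambda + \delta$. Taking $\epsilon = 2\lambda + \delta$ gives $\epsilon \propto \lambda$ up to the vanishing anchor slack, which proves the existence claim. For a pair separated from the nearest seed by $k$ hops rather than one, I would iterate the telescoping argument along the connecting path, accumulating a bound of $2k\lambda + \delta$; the proportionality to $\lambda$ survives with $k$ absorbed into the constant.

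The hard part will be pinning down the quantifier and the hidden constant. The bound is vacuous without a connecting anchor, so ``potentially aligned'' must carry the structural hypothesis above; and for entities with sparse neighborhoods the shortest anchoring path, hence the multiplier on $\lambda$, can grow large. This looseness is precisely what the later sections exploit to motivate NeoEA, so I would state the result as an existence bound ($\exists \epsilon$) matching the proposition rather than a uniform one, and flag the relation-matching assumption $\rvr^1 \approx \rvr^2$ as the place where joint training with shared relation embeddings is doing the work.
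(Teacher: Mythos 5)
Your proof is correct and takes essentially the same route as the paper's: anchor both entities to a seed pair through a shared relation, read off the two margin constraints $\le \lambda$, combine them with a triangle inequality to get a $2\lambda$-type bound, and recurse along connecting paths (the paper's $2\lambda + ||C^1_i - C^2_i||$ unrolling) for entities farther from the seeds. The only cosmetic differences are that you keep an explicit anchor slack $\delta$ where the paper assumes $\rve_i^1 = \rve_i^2$ exactly at the optimum, and you place the anchor at the head rather than the tail of the triple--neither changes the argument.
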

\begin{proof}
	We start with the case that each entity in  $(e_x^1, e_y^2)$ has only one neighbor, connected by the same relation $r^1 = r^2$. We assume that their neighbors are actually a pair of aligned entities $(e_i^1, e_i^2) \in \gS$. With a well-trained and almost optimal EEA model, we have $\rve_i^1 = \rve_i^2$ (as $\mathcal{L}_a$ is minimized) and $\rvr^1=\rvr^2$ (denoted by $\rvr$ for simplicity). According to Equation~(\ref{eq:score}), we have:
	\begin{align}
		||f_s(\rve_x^1, \rvr, \rve_i^1)|| \approx ||f_s(\rve_y^2, \rvr, \rve_i^2)|| \leq \lambda.
	\end{align}
	Without loss of generality, we consider the scoring function of TransE as $f_s$, and then derive:
	\begin{align}
		\label{eq:score_func}
		||\rve_x^1 + \rvr - \rve_i^1|| \leq \lambda,\quad||\rve_y^2 + \rvr - \rve_i^2|| \leq \lambda.
	\end{align}
	For simplicity, we use a constant $C$ to denote $\rvr - \rve_i^1$ and $\rvr - \rve_i^2$, such that Equation~(\ref{eq:score_func}) will be rewritten as
	\begin{align}
		\label{eq:rewrite_score_func}
		||\rve_x^1 + C|| \leq \lambda,\quad||\rve_y^2 + C|| \leq \lambda.
	\end{align}
	Then, we get
	\begin{align}
		2 \lambda \geq & ||\rve_x^1 + C|| + ||\rve_y^2 + C|| \nonumber\\
		\geq&||(\rve_x^1 + C) - (\rve_y^2 + C)|| \nonumber\\ 
		= & ||\rve_x^1-\rve_y^2||.
	\end{align}
	Now, we consider a more complicated case, where the neighbors of $e_x^1$ and $e_y^2$ are not in the known alignment set. We denote $\rvr - \rve_i^1$ and $\rvr - \rve_i^2$ by $C^1_i$ and $C^2_i$, respectively. If the neighbors of the neighbors $e_i^1, e_i^2$ are a pair of known alignment, we will have $||C^1_i - C^2_i|| = ||\rve_i^2- \rve_i^1|| \leq 2\lambda$, otherwise we can recursively navigate more neighbors. Therefore, we have:
	\begin{align}
		2 \lambda \geq & ||\rve_x^1-\rve_y^2 + (C^1_i - C^2_i)|| \nonumber\\
		\geq & ||\rve_x^1-\rve_y^2|| - ||C^1_i - C^2_i||,
	\end{align}
	which results to an looser bound:
	\begin{align}
		||\rve_x^1-\rve_y^2|| \leq 2\lambda + ||C^1_i - C^2_i||.
	\end{align}
	For the case that the entities $(e_x^1, e_y^2)$ have more than one neighbors, the bound will be further tighten as the embeddings are constrained by multiple triples.
\end{proof}

Proposition~\ref{proposition:bound} suggests that decreasing the value of $\lambda$ will decrease the embedding discrepancy in the underlying aligned entity pairs. However, previous studies \cite{ComplEx,SimplE} have proved that $\lambda$ cannot be set as small as we want. This is because TransE with a small margin is not sufficient to fully capture the semantics contained in triples. Some empirical statistics~\cite{BootEA} also illustrate such results. Enlarging the margin $\lambda$, on the other hand, will bring significant variance between $\rve_x^1$ and $\rve_y^2$, especially for those entities with few neighbors. For the models that do not belong to the TransE family, e.g., neural-based like ConvE~\cite{ConvE}, or composition-based like ComplEx~\cite{ComplEx}, as proved in ~\cite{BilinearAAAI18,SimplE}, they are more expressive than TransE. In this case, entities with sufficient neighbors can be correctly modeled, while entities with only a few neighbors are also less constrained. Therefore, those models allow more diversity between $\rve_x^1$ and $\rve_y^2$. We believe this is why they performed badly in the EA task \cite{RSN,OpenEA}.

\begin{figure*}[htb]
	
	\centering
	\hfill
	\begin{subfigure}[h]{0.195\textwidth}
		\includegraphics[width=\textwidth]{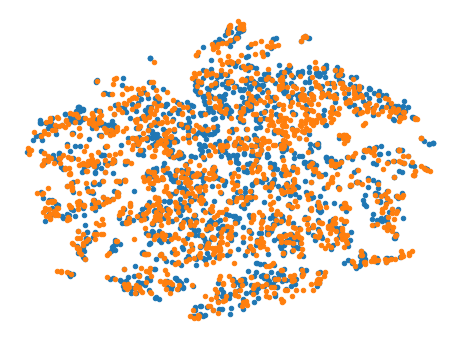}
		\caption{}\label{fig:dista}
	\end{subfigure}
	\hfill
	\begin{subfigure}[h]{0.195\textwidth}
		\includegraphics[width=\textwidth]{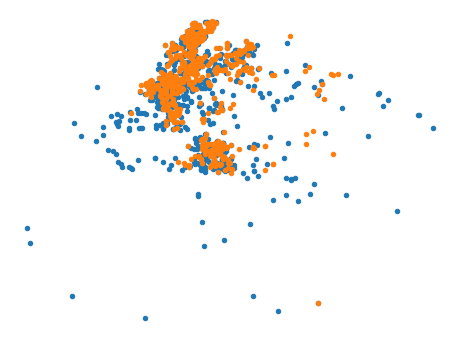}
		\caption{}\label{fig:distb}
	\end{subfigure}
	\hfill
	\begin{subfigure}[h]{0.195\textwidth}
		\includegraphics[width=\textwidth]{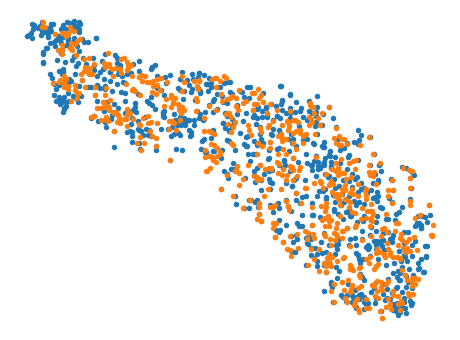}
		\caption{}\label{fig:distc}
	\end{subfigure}
	\hfill
	\begin{subfigure}[h]{0.195\textwidth}
		\includegraphics[width=\textwidth]{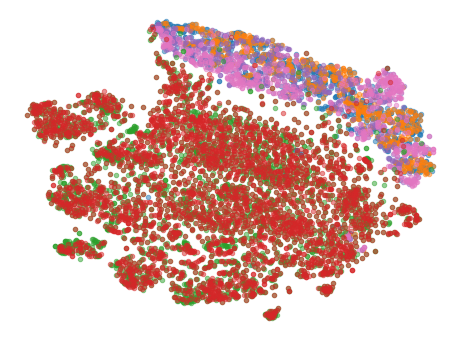}
		\caption{}\label{fig:distd}
	\end{subfigure}
	\hfill
	\begin{subfigure}[h]{0.195\textwidth}
		\includegraphics[width=\textwidth]{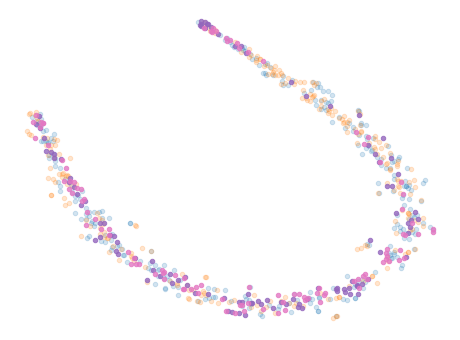}
		\caption{}\label{fig:diste}
	\end{subfigure}
	
	\caption{Example of different embedding distributions. (a) Entity embedding distributions of two KGs, i.e., $\sA_{E}$. $Blue$ points denote entities in $\gG_1$, and $orange$ ones are entities in $\gG_2$. The two embedding sets are nearly uniformly distributed and almost aligned (based on the EEA model RDGCN~\protect\cite{RDGCN}, the same below). (b) The head entity embedding distributions of relation ``genre''. The two distributions are only aligned partially. (c) Head entity embedding distributions conditioned on ``genre'', i.e., $\sA_{E_h|r_i}$. Two conditioned distributions are aligned as expected. (d) The head entity distributions conditioned on three different relations: ``genre'' (colors: $<blue,orange>$), ``writer'' (colors: $<purple,pink>$), ``brithPlace'' (colors: $<green,red>$). The distributions corresponding to the first two relations are overlapped, while a clear decision boundary between them and the last one is observed. (e) Triple embedding distributions conditioned on relations ``artist'' (colors: $<blue,orange>$) and ``musicalArtist'' (colors: $<purple,pink>$), respectively. The entity embeddings referred to sub-relation ``musicalArtist'' are covered by those corresponding to ``artist''.}
	\label{fig:dist}
\end{figure*}

In short, most existing works adopt the above strategy to learn cross-KG embeddings for EA, which makes them stuck in balancing between the bound and the expressiveness. On the one hand, they want the KG embedding model can fully model all given triples. On the other hand, they also want the discrepancy between potentially aligned entities to be restrained more tightly. In this paper, we explore a new direction to align the conditioned embedding distributions of two KGs to ensure the embeddings \emph{principled}.

\section{Neural Ontology}
\label{sec:Neo}

\subsection{Aligning Embedding Distributions with Adversarial Learning}
In real-world KGs, entities conform with the axioms in ontologies~\cite{OWL2}. Similarly, we call the entity embedding distributions ``neural axioms'', as aligning them also allows us to regularize the entity embeddings at a high level.

We start from an introduction to the entity embedding distribution. It is well-known that entity embeddings can implicitly capture some ontology-level information~\cite{TransE, DistMult}. For example, entities that belong to the same class are usually spatially close to each other in the vector space. In the other way around, a cluster of entity embeddings in the vector space may also indicate the existence of a class. Our goal is to exploit such ontology-level knowledge from the embedding distributions. Therefore, we define the basic neural axiom as the distribution of entity embeddings: 
	\begin{equation}
	\begin{aligned}
    \sA_{E} =&  p_e (\rve),
	\end{aligned}
	\end{equation}
where $p_e$ is the entity probability distribution over the sample set $E$ (in this case it equals to $\gE$). Aligning the basic neural axioms $\sA_{E}^1$ and $\sA_{E}^2$ of two KGs is trivial. We take the advantages of existing domain adaptation (DA) methods~\cite{DANN,WDGRL,DATheory,JDOT} that also aims to align the feature distributions of two datasets for knowledge transferring. Specifically, we consider the method based on adversarial learning \cite{GAN,wGAN}, where a discriminator is employed to distinguish entity embeddings of $\gE_1$ from those of $\gE_2$ (or vice versa). The embeddings, by contrast, try to confuse the discriminator. Therefore, the same semantics in two KGs shall be encoded in the same way into the embeddings to fool the discriminator. The corresponding empirical Wasserstein distance based loss~\cite{wGAN,WDGRL} is:
\begin{equation}
\label{eq:wd}
\mathcal{L}_{\sA_{E}} = \E_{\sA_{E}^1} [f_w(\rve)] - \E_{\sA_{E}^2} [f_w(\rve)],
\end{equation}
where $f_w$ is the learnable domain critic that maps the embedding vector to a scalar value. As suggested in \cite{wGAN}, the empirical Wasserstein distance can be approximated by maximizing $\mathcal{L}_{\sA_{E}}$, if the parameterized family of $f_w$ are all 1-Lipschitz.

Although the above method provide a general solution for many alignment tasks, it is not completely appropriate to the EEA problem. The most important reason is that entity embeddings are initialized randomly and tend to uniformly distributed in the vector space, which we can observe from Figure~\ref{fig:dista}. 

Recall that the alignment loss $\mathcal{L}_a$ consists of two terms. The first is
$\sum_{(e_i^1,e_i^2) \in \mathcal{S}}||\rve_i^1 - \rve_i^2||$,
which aims to minimize the difference of embeddings for each positive pair. The cardinality of $\mathcal{S}$ is usually small in the weakly supervised setting. However, a large size of negative samples are used for contrastive learning, which means that $||\mathcal{S}|| \ll ||\mathcal{S}^-||$. The model actually put more effort into the second term
$\sum_{(e_{i'}^1,e_{j'}^2) \in \mathcal{S}^{-}}\mathit{ReLU}(\alpha - || \rve_{i'}^1 - \rve_{j'}^2||)$,
of which the main target is to randomly push the embeddings of non-aligned entities away from each other. On the other hand, $\mathcal{L}_s$ is also a contrastive loss and has a similar effect on maximizing the pairwise distance between a positive entity and its corresponding sampled negative ones. Therefore, we may conclude that: 
\begin{proposition}[Uniformity]
	\label{prop:uniformity}
	The entity embeddings of two KGs tend to be uniformly distributed in the vector space as an EEA model is optimized.
\end{proposition}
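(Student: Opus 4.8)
The plan is to show that the optimization of an EEA model is governed by repulsive, contrastive forces, and that such forces drive the empirical distribution of entity embeddings toward the uniform distribution. First I would decompose the total objective $\mathcal{L}_a + \mathcal{L}_s$ into an \emph{attractive} part and a \emph{repulsive} part. The attractive part is the positive term $\sum_{(e_i^1,e_i^2)\in\mathcal{S}}\|\rve_i^1-\rve_i^2\|$, which acts only on the seed set $\mathcal{S}$; the repulsive part collects the negative-pair terms of both losses, each of the form $\mathit{ReLU}(\alpha-\|\rve-\rve'\|)$ for $\mathcal{L}_a$ and $\mathit{ReLU}(\lambda-f_s(\tau'))$ for $\mathcal{L}_s$, which penalize embeddings for lying closer than a fixed margin. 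Because $\|\mathcal{S}\|\ll\|\mathcal{S}^-\|$, the gradient contributed by the repulsive part dominates in expectation, so to first order the embeddings evolve under a sum of pairwise repulsive potentials $G(\rve,\rve')=\mathit{ReLU}(\alpha-\|\rve-\rve'\|)$ sampled over effectively all entity pairs of $\gE_1$ and $\gE_2$.

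Next I would pass to the population limit and recast the problem as minimizing a potential-energy functional over probability measures. Writing $p$ for the shared embedding distribution, the repulsive loss converges to $J[p]=\iint G(\rve,\rve')\,dp(\rve)\,dp(\rve')$, and minimizing the training objective amounts to minimizing $J[p]$ subject to a constraint that keeps the embeddings bounded (supplied in practice by weight decay, normalization, or the finite dynamic range of the optimizer). The key fact I would invoke is that, for a strictly decreasing radial potential on a compact domain — canonically the Gaussian potential on the hypersphere — the unique energy-minimizing measure is the uniform measure; this is exactly the alignment/uniformity characterization of contrastive representation learning and the classical minimal-energy result from potential theory for positive-definite kernels. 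Under that characterization the minimizer of $J[p]$ spreads mass as evenly as possible, so $p$ tends to the uniform distribution on the constraint set. Since $f_a$ is shared and the positive term ties the seed embeddings together via $\rve_i^1\approx\rve_i^2$, the same limiting law governs the embeddings of both $\gE_1$ and $\gE_2$, which is the content of the proposition and is consistent with Figure~\ref{fig:dista}.

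The hard part will be that the margin potential $\mathit{ReLU}(\alpha-\|\rve-\rve'\|)$ is \emph{flat} once every pairwise distance exceeds $\alpha$: it is only weakly repulsive, and its minimizer over measures is highly non-unique, so it does not by itself single out the uniform distribution. I would bridge this gap in one of two ways. The cleaner route is to analyze the softmax/InfoNCE surrogate of the hinge term, namely $\log\,\E_{\rve'}[\exp(-\|\rve-\rve'\|^2/t)]$, whose Gaussian kernel is strictly positive definite and to which the uniformity theorem applies verbatim, yielding \emph{uniqueness}. Alternatively, I would argue that with a finite embedding budget the sampled margin penalties over all pairs cannot be simultaneously driven to zero unless the pairwise distances are maximally and evenly separated, so the margin objective is minimized precisely by maximally separated — hence asymptotically uniform — configurations. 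Either way the conclusion is the same: as the EEA model is optimized, the contrastive pressure makes the entity embeddings of both KGs tend to a uniform distribution.
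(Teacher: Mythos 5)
Your proposal takes essentially the same route as the paper: the paper's entire justification for this proposition is the informal version of your first paragraph---it observes that $||\mathcal{S}|| \ll ||\mathcal{S}^-||$, so the negative-pair terms of both $\mathcal{L}_a$ and $\mathcal{L}_s$ dominate and act as pairwise repulsion pushing non-aligned embeddings apart, and it then cites the alignment/uniformity analysis of contrastive representation learning in place of any formal argument. Your potential-theoretic formalization, and in particular your identification of the flat hinge potential $\mathit{ReLU}(\alpha - ||\rve - \rve'||)$ as the obstruction to uniqueness of the uniform minimizer, goes beyond the paper, which leaves that gap unaddressed and states the proposition only as a qualitative tendency.
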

\noindent This characteristic has also been studied in~\cite{Uniformity} in other representation learning problems. It is actually a good property revealing that the information of entities are efficiently encoded to maximize the entropy. However, for the given EEA problem, the entity embedding distributions of two KGs will be similar to each other, especially when the seed alignment pairs exist. Thus, only aligning the basic neural axioms is less helpful to facilitate EEA.

\begin{figure*}[t]
	\centering
	\includegraphics[width=.98\textwidth]{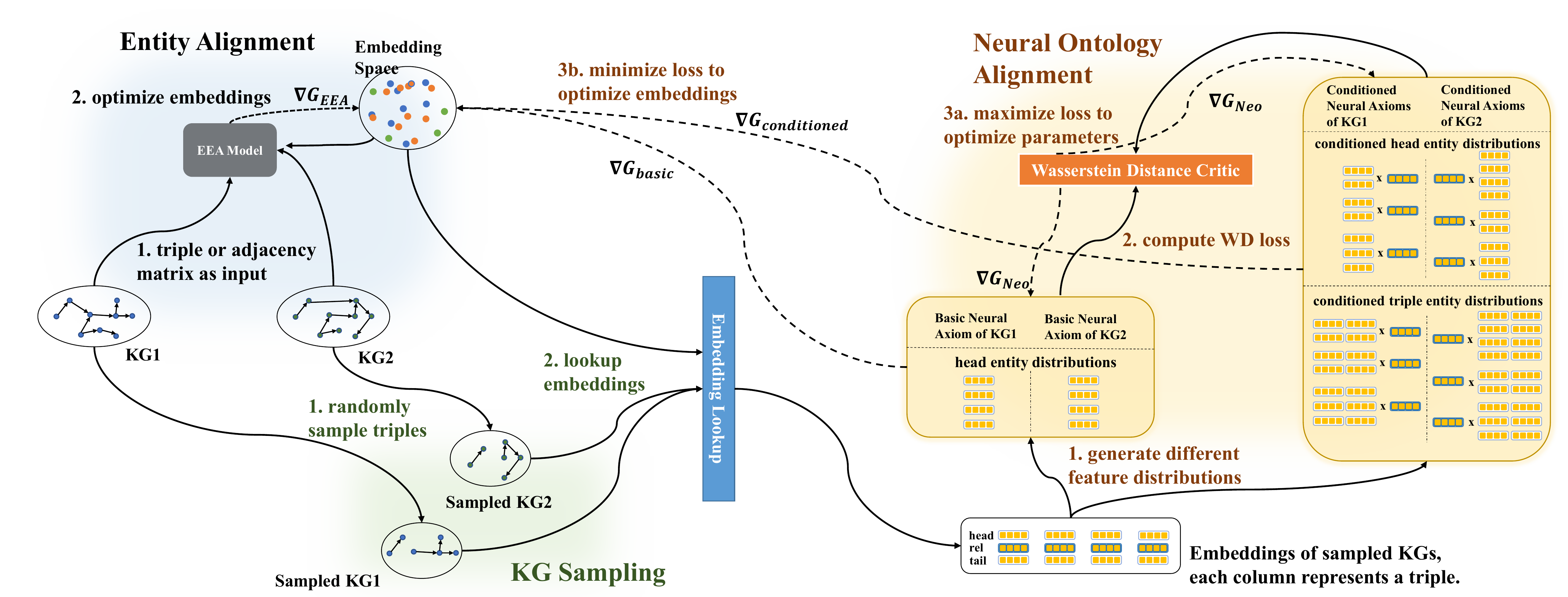}
	\caption{Architecture of NeoEA. Solid lines denote forward propagation, while dotted lines represent backward propagation. The architecture consists of three decoupled modules: (1) Entity alignment module, in which the EEA model is unaware of the existence of other modules. (2) KG sampling module, in which we sample sub-KGs to replace the whole KGs for efficiency. (3) Neural ontology alignment module, in which the discrepancy between each pair of neural axioms is estimated by Wasserstein-distance critic and optimized by gradient ascent/descent.}
	\label{fig:arch}
\end{figure*}

\subsection{Conditional Neural Axiom}
\label{sec:conNA}

We can estimate the conditional distributions rather than the raw distributions to avoid the problem brought from the uniformity property. Specifically, we name conditional neural axioms to describe the entity (or triple) embedding distributions under specific semantics:
	\begin{align}
	\sA_{E_h|r_i} &= p_{e_h|r_i} (\rve_h|\rvr_i),
	\end{align}
where $\sA_{E_h|r_i}$ denotes the head entity embedding distribution conditioned on the relation embedding $\rvr_i$. $p_{e_h|r_i}$ is the conditional probability distribution of the head entities given $r_i$. The corresponding sample set is defined as:
	\begin{align}
	E_h|r_i &= \{e \,|\, \exists e^\prime, (e, r_i, e^\prime) \in \gT \}.
	\end{align}
Following the similar rule, we can define the conditioned triple distribution $\sA_{E_{h,t}|r_i}$ with the sample set 
\begin{align}
E_{h,t}|r_i = \{(e_h, e_t)\,|\,(e_h, r_i, e_t) \in \gT \}.
\end{align}
Numerous methods have been proposed to process the neural conditioning operation, ranging from addition and concatenation~\cite{cGAN,TransH, ConvE}, to matrix multiplication~\cite{TransR,TransD,STransE}. Rather than developing new methods, we value more on its common merit: projecting the entities to a relation-specific subspace~\cite{TransH,TransR,STransE}. Hence, the corresponding embedding distributions conditioned on different relations become discriminative, compared to uniformly distributed in the original embedding space.

Furthermore, conditional neural axioms capture high-level ontology knowledge:
\begin{proposition}[Expressiveness]
    \label{proposition:expressiveness}
	Aligning the conditional neural axioms minimizes the embedding discrepancy of two KGs at the ontology level.
\end{proposition}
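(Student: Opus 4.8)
Because the statement is semantic rather than purely algebraic, the plan is first to fix a precise meaning for the phrase ``discrepancy at the ontology level,'' and then to exhibit an explicit dictionary between the ontology axioms that constrain a KG and the conditional neural axioms defined above. I would identify the ontology-level knowledge of a KG with the axiom schemata that restrict which entities may serve as head or tail of each relation — object-property domain and range axioms — together with the containment structure induced by sub-property axioms, exactly the cases depicted in Figures~\ref{fig:distd} and~\ref{fig:diste}. The dictionary reads as follows: a domain (resp.\ range) axiom for $r_i$ is witnessed by the support and shape of $\sA_{E_h|r_i}$ (resp.\ its tail analogue), and a sub-property axiom $r_1 \sqsubseteq r_2$ is witnessed by the inclusion of the support of $\sA_{E_{h,t}|r_1}$ inside that of $\sA_{E_{h,t}|r_2}$.

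With this dictionary in place, I would define the ontology-level discrepancy of the two KGs as a sum $\Delta_{\mathrm{onto}} = \sum_a \delta(a^1, a^2)$ ranging over these axiom schemata, where $\delta$ measures how far an axiom inferred from $\gG_1$ disagrees with the corresponding one inferred from $\gG_2$, and then argue in three steps. First, for domain and range I would use that entities of a common class cluster in embedding space — the property exploited implicitly in Proposition~\ref{proposition:bound} — to recover the admissible head class of $r_i$ from the support of $\sA_{E_h|r_i}$; hence driving $\sA^1_{E_h|r_i}$ and $\sA^2_{E_h|r_i}$ together forces the two KGs to assign matching domain classes, i.e.\ $\delta(a^1,a^2)=0$. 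Second, for sub-property axioms I would show that aligning the conditional triple distributions across the two KGs preserves the covering relation between $\sA_{E_{h,t}|r_1}$ and $\sA_{E_{h,t}|r_2}$, so the sub-property structure of $\gG_1$ transfers to $\gG_2$. Third, I would assemble these pointwise facts into a per-axiom bound of the form $\delta(a^1,a^2) \le c\, W\!\left(\sA^1_a, \sA^2_a\right)$, where $W$ is the Wasserstein distance minimized by the adversarial loss of Equation~(\ref{eq:wd}) and $c$ depends on the encoder's expressiveness, and sum over $a$ to conclude that minimizing the conditional losses minimizes $\Delta_{\mathrm{onto}}$.

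The hard part will be the \emph{faithfulness} direction of the dictionary. That an ontology axiom constrains its conditional distribution is immediate, but the argument needs the converse — that the axiom is identifiable from the distribution — which can fail when the encoder is not expressive enough to separate classes, the very tension flagged after Proposition~\ref{proposition:bound}. I would therefore state the result under an identifiability hypothesis: distinct domain or range classes induce conditional distributions with well-separated supports, precisely the decision-boundary behavior observed in Figure~\ref{fig:distd}. A secondary difficulty is that the Wasserstein objective only guarantees closeness, not exact equality, of distributions; the clean ``$\delta = 0$'' conclusion must then be relaxed to an $\epsilon$-style bound in the spirit of Proposition~\ref{proposition:bound}, which I would absorb into the constant $c$ and report as a bound on $\Delta_{\mathrm{onto}}$ rather than its exact vanishing.
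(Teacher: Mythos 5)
Your core argument is the same one the paper uses: set up a dictionary between ontology axioms and the conditional embedding distributions, then control the ontology-level discrepancy by the Wasserstein distance that the adversarial loss minimizes. For ObjectPropertyDomain/Range your class-support argument is essentially the paper's classifier argument (the paper posits a class indicator $f_c$, notes the axiom forces $\E_{\sA^1_{E_h|r_i}}[f_c(\rve)] = \E_{\sA^2_{E_h|r_i}}[f_c(\rve)] = 1$, and then replaces the unknown $f_c$ by a learned critic $f_{c^\prime}$ in Equation~\ref{eq:cwd}), and your per-axiom bound $\delta(a^1,a^2) \le c\, W(\sA^1_a,\sA^2_a)$ is exactly what Kantorovich--Rubinstein duality gives when the test function is (approximately) Lipschitz, which is implicit in the paper's use of the Wasserstein critic.

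There are two substantive differences, one of which is a real gap relative to the paper. First, coverage: the paper's proof (Appendix~\ref{app:neoea_proof}) is a case analysis over \emph{all} OWL2 EL object-property axiom families --- domain, range, reflexivity, irreflexivity, functionality, inverse functionality, symmetry, asymmetry, sub-property, equivalence, disjointness, inverses, and transitivity --- and the point of that case analysis is that every family beyond domain/range reduces to a rule on the conditional \emph{triple} distribution $\sA_{E_{(h,t)}|r_i}$, so aligning that single axiom type per relation handles all of them at once. Your dictionary stops at domain/range plus sub-property, so you prove a strictly narrower statement than the paper does; the fix is cheap, though, since the conditional triple axiom you already invoke for sub-property is the only tool the remaining cases need. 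Second, your ``faithfulness''/identifiability hypothesis is unnecessary for the direction actually claimed. The proposition only asserts that aligning conditional distributions \emph{minimizes} ontology-level discrepancy; accordingly, the paper defines that discrepancy through expectations of hypothetical classifiers (``potential axioms''), so only the easy direction of your dictionary --- axiom $\Rightarrow$ distributional constraint --- is ever used. Identifiability would be needed to \emph{recover} symbolic axioms from embeddings, or to prove a converse, neither of which the statement demands; conditioning your proof on that hypothesis makes your result weaker than necessary, not stronger.
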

\begin{proof}
	See Appendix~\ref{app:neoea_proof} for details. 
\end{proof}

	We take $\sA_{E_h|r_i}$ as an example that summarizes the empirical ``Object Property Domain'' axiom of $r_i$ in OWL 2 EL~\cite{OWL2-EL}. Supposed there exists such an axiom stating that the head entities of $r_i$ should belong to some specific class $c$ (e.g., only head entities belonging to the class "Person" have the relation "birthPlace"). We further suppose that there exists a classifier $f_c (\rve) \in [0, 1]$, such that $f_c(\rve_j) = 1$ if head entity $e_j$ belongs to class $c$, and $0$ otherwise. Then, with the knowledge of the given axiom, one may derive the following rule:
	\begin{equation}
	\forall e \in \{e\,|\,\exists e^\prime, (e, r_i, e^\prime) \in \gT_1\cup \gT_2 \},\ f_c(\rve) = 1,
	\end{equation}
	which is equivalent to:
	\begin{equation}
	\begin{aligned}
	\E_{\sA^1_{E_h|r_i}} [f_c(\rve)] = \E_{\sA^2_{E_h|r_i}} [f_c(\rve)] = 1,
	\end{aligned}
	\end{equation}
	both of which means that all head entities of $r_i$ in either KG should be correctly classified to $c$. Then, we have:
	\begin{equation}
	\E_{\sA_{E_h|r_i}^1} [f_c(\rve)] - \E_{\sA_{E_h|r_i}^2} [f_c(\rve)] = 0.
	\end{equation}
	In fact, we do not have such knowledge about $r_i$ and class $c$. Instead, we can leverage a neural function $f_{c^\prime}(\rve|\rvr_i)$ to estimate $f_c$ empirically. In this way, $\sA_{E_h|r_i}^1$ and $\sA_{E_h|r_i}^2$ are supposed to be aligned to minimize the loss corresponding to the above rule. Therefore, we deduce this problem back to a similar form to Equation~\ref{eq:wd}, i.e.,
	
	\begin{equation}
	\label{eq:cwd}
	\mathcal{L}_{\sA_{E_h|r_i}} = \E_{\sA_{E_h|r_i}^1} [f_{c^\prime}(\rve\,|\,\rvr_i)] - \E_{\sA_{E_h\,|\,r_i}^2} [f_{c^\prime}(\rve|\rvr_i)],
	\end{equation}
	
	 \noindent which suggests that aligning the above conditional neural axioms can minimize the discrepancy of potential ``Object Property Domain'' axioms between two KGs.

\begin{example}[OWL2 axiom: ObjectPropertyDomain]
As shown in Figure~\ref{fig:distb} and Figure~\ref{fig:distc}, we assume that the head entities of relation ``genre'' are under the class ``Work of Art'' (although it does not exist in the dataset). It is clear that the head entity embedding distributions are only partially aligned in Figure~\ref{fig:distb}, while those in Figure~\ref{fig:distc} are matched well. 

In Figure~\ref{fig:distd}, we illustrate a more complicated example. The head entities of relations ``genre'' and ``writer'' mainly belong to ``Work of Art'', which show overlapped distributions (blue-orange, pink-purple) in the figure. By contrast, there exists a clear decision boundary between them and the distributions conditioned on relation ``birthPlace'' (red-green), as the head entities of relation ``birthPlace'' are under the class ``Person''.
\end{example}

\begin{example}[OWL2 axiom: SubObjectPropertyOf]
We consider two relations, ``musicalArtist'' and ``artist'' as an example, where the former is the latter's sub-relation. In Figure~\ref{fig:diste}, the triple distributions conditioned on ``musicalArtist'' (colors: pink-purple) are covered by those conditioned on ``artist'' (colors: orange-blue).
\end{example}

\begin{algorithm}[t]
	\caption{NeoEA}
	\label{alg:NeoEA}
	\begin{algorithmic}[1]
		\STATE {\bfseries Input:} two KGs $\gG_1$, $\gG_2$, the seed alignment set $\gS$, the EEA model $\gM(f_s, f_a)$, number of steps for NeoEA $n$;
		\STATE Initialize all variables;
		\REPEAT
		\FOR{$i:=1$ {\bfseries to} $n$}
		\STATE Sample sub-KGs from respective KGs $\gG_1$, $\gG_2$;
		\STATE Compute the Wasserstein-distance-based loss $\mathcal{L}_{w}$ for each pair of neural axioms;
		\STATE Optimize the Wasserstein distance critic $f_w$ by maximizing $\mathcal{L}_{w}$.
		\ENDFOR
		
		\STATE Sample sub-KGs from respective KGs $\gG_1$, $\gG_2$;
		\STATE Compute Wasserstein-distance-based loss $\mathcal{L}_{w}$ for each pair of neural axioms;
		\STATE Compute the losses $\mathcal{L}_{a}$, $\mathcal{L}_{s}$ of the EEA model $\gM$;
		\STATE Optimize the EEA model and embeddings by minimizing $\mathcal{L}_{a}$, $\mathcal{L}_{s}$, $\mathcal{L}_{w}$;
		
		\UNTIL{the alignment loss on the validation set converges.}
	\end{algorithmic}
\end{algorithm}

\subsection{Implementation}
\label{app:arch}
We illustrate the overall structure and training procedure in Figure~\ref{fig:arch} and Algorithm~\ref{alg:NeoEA}, respectively. The framework can be divided into three modules:

\textbf{Entity Alignment.} This module aims at encoding the semantics of KGs into embeddings. Almost all existing EEA models can be used here, no matter what the input data look like (e.g., triples or adjacency matrices).

\textbf{KG Sampling.} For each KG, we sample a sub-KG to estimate the data distributions of neural axioms. It is more efficient than separately sampling candidates for each axiom, especially when KGs get big.

\textbf{Neural Ontology Alignment.} As aforementioned, for each pair of embedding distributions, we align them by minimizing the empirical Wasserstein distance.

For efficiency, we share the parameters of Wasserstein distance critic only in each \emph{type} of neural axiom, which reduces the number of model parameters and avoids the situation that some relations only have a small number of triples. This also allows us to perform fast mini-batch training by aligning the axioms of the same type in one operation. Given the sample KGs $\gG'_1 = (\gE'_1, \gR'_1, \gT'_1)$, $\gG'_2 = (\gE'_2, \gR'_2, \gT'_2)$, the corresponding batch loss is:
\begin{align}
	\label{eq:batch_loss}
	\mathcal{L}_{\mathit{sep}} =& \mathcal{L}_{\sA_{E'}} \nonumber\\
	+& ( \sum_{r' \in \gR'_1}{\E_{\sA_{E'_h|r'}}[f_{h|r}(\rve|\rvr')]} \nonumber\\
	&- \sum_{r' \in \gR'_2}{\E_{\sA_{E'_h|r'}}[f_{h|r}(\rve|\rvr')]} ) \nonumber\\
	+& ( \sum_{r' \in \gR'_1}{\E_{\sA_{E'_{h,t}|r'}}[f_{h,t|r}(\rve_h, \rve_t|\rvr')]} \nonumber\\
	&- \sum_{r' \in \gR'_2}{\E_{\sA_{E'_{h,t}|r'}}[f_{h,t|r}(\rve_h, \rve_t|\rvr')]} ) 
\end{align}
where $\mathcal{L}_{\sA_{E'}}$ is the basic axiom loss under the sampled KGs. $f_{h|r}$ and $f_{h,t|r}$ are the critic functions of two types of neural axioms, respectively. The loss $\mathcal{L}_{\mathit{batch}}$ will approximate to that in pairwise calculation when batch-size is considerably greater than the number of relations. We take the second term in Equation~(\ref{eq:batch_loss}) as an example. For pairwise estimation, the loss should be:
\begin{align}
	\sum&_{(r_1, r_2)\in \gS_r}{(\E_{\sA_{E'_h|r_1}}[f_{h|r}(\rve|\rvr_1)] - \E_{\sA_{E'_h|r_2}}[f_{h|r}(\rve|\rvr_2)])}\nonumber\\
	=&\sum_{(r_1, r_2)\in \gS_r}{\E_{\sA_{E'_h|r_1}}[f_{h|r}(\rve|\rvr_1)]}\nonumber\\
	-&\sum_{(r_1, r_2)\in \gS_r}{\E_{\sA_{E'_h|r_2}}[f_{h|r}(\rve|\rvr_2)]},
\end{align}
where $\gS_r \subset \gR_1 \times \gR_2$ denotes the set of all aligned relation pairs. The above equation suggests that the pairwise loss is based on the respective relation sets of two KGs, not constrained by each pair of aligned relations. Generally, the number of relations is much smaller than the number of sampled triples in one batch, which means that $\gR'_1, \gR'_2$ in Equation~(\ref{eq:batch_loss}) can cover a large proportion of elements in the full relation sets $\gR_1$, $\gR_2$. Therefore, we used $\mathcal{L}_{\mathit{batch}}$ to approximate the pairwise loss in the implementation.

\section{Experiments}
\label{sec:expr}

\begin{table*}[!t]
	\centering
	\caption{Results on V1 datasets (5-fold cross-validation).}
	\resizebox{.82\textwidth}{!}{
		\begin{tabular}{lcccccccccccc}
			\toprule
			\multirow{2}{*}{Models} & \multicolumn{3}{c}{EN-FR} & \multicolumn{3}{c}{EN-DE} & \multicolumn{3}{c}{D-W} & \multicolumn{3}{c}{D-Y}\\
			\cmidrule(lr){2-4} \cmidrule(lr){5-7} \cmidrule(lr){8-10} \cmidrule(lr){11-13}
			& H@1 & H@5 & MRR & H@1 & H@5 & MRR & H@1 & H@5 & MRR & H@1 & H@5 & MRR \\ \midrule
			RSN \cite{RSN}& .393 & .595 & .487 & .587 & .752 & .662 & .441 & .615 & .521 & .514 & .655 & .580 \\
			RSN\,+\,NeoEA & \textbf{.399} & \textbf{.597} & \textbf{.490} & \textbf{.600} & \textbf{.759} & \textbf{.673} & \textbf{.450} & \textbf{.624} & \textbf{.530} & \textbf{.522} & \textbf{.663} & \textbf{.588}\\
			\midrule
			SEA \cite{SEA} & .280 & .530 & .397 & .530 & .718 & .617 & .360 & .572 & .458 & .500 & .706 & .591 \\
			SEA\,+\,NeoEA & \textbf{.320} & \textbf{.584} & \textbf{.443} & \textbf{.586} & \textbf{.766} & \textbf{.668} & \textbf{.389} & \textbf{.608} & \textbf{.490} & \textbf{.549} & \textbf{.752} & \textbf{.638}\\
			\midrule
			BootEA \cite{BootEA} & .507 & .718 & .603 & .675 & \textbf{.820} & \textbf{.740} & .572 & .744 & .649 & .739 & .849 & .788 \\
			BootEA\,+\,NeoEA & \textbf{.521} & \textbf{.733} & \textbf{.617 }& \textbf{.676} & \textbf{.820} & \textbf{.740} & \textbf{.579} & \textbf{.753} & \textbf{.658} & \textbf{.756} & \textbf{.859} & \textbf{.797}\\
			\midrule
			RDGCN \cite{RDGCN}& .755 & .854 & .800 &.830 & .895 & .859 & .515 & .669 & .584 & .931 & .969 & .949 \\
			RDGCN\,+\,NeoEA & \textbf{.775} & \textbf{.868} & \textbf{.817} & \textbf{.846} & \textbf{.908} & \textbf{.874} & \textbf{.527} & \textbf{.671} & \textbf{.592} & \textbf{.941} & \textbf{.972} & \textbf{.955}\\
			\bottomrule
	\end{tabular}}
	\label{tab:main_results_v1}
\end{table*}

\begin{figure*}[t]
	\centering
	\includegraphics[width=.95\linewidth]{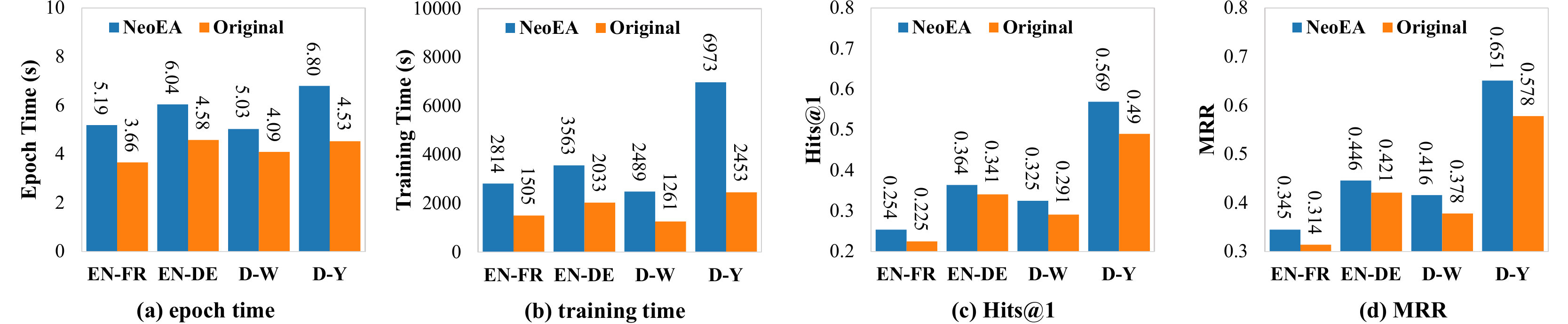}
	\caption{Results on OpenEA 100K datasets.}
	\label{fig:100k}
\end{figure*}

In this section, we empirically verify the effectiveness of NeoEA by a series of experiments with state-of-the-art methods as baselines. The source code was uploaded \footnote{https://anonymous.4open.science/r/NeoEA-9DB5}.
\subsection{Settings}
We selected four best-performing and representative models as our baselines:
\begin{itemize}
    \item RSN~\cite{RSN},  an RNN-based EEA model with only structure data as input.
    \item SEA~\cite{SEA},  a TransE-based model with both structure and attribute data as input.
    \item BootEA~\cite{BootEA},  a TransE-based EEA model with only structure data as input.
    \item RDGCN~\cite{RDGCN}, a GCN-based model with both structure and attribute data as input.
\end{itemize}
The whole framework is based on OpenEA~\cite{OpenEA}. We modified only the initialization of the original project. In this sense, the EEA models were unaware of the existence of neural ontologies. Furthermore, we kept the optimal hyper-parameter settings in OpenEA to ensure a fair comparison.

\sloppy The data distributions of some previous benchmarks such as JAPE~\cite{JAPE} and BootEA~\cite{BootEA} are different from those of real-world KGs, which means that conducting experiments on those benchmarks cannot reflect the realistic performance of an EEA model~\cite{RSN,OpenEA}. Therefore, we consider the latest benchmark provided by OpenEA~\cite{OpenEA}, which consists of four sub-datasets with two density settings. Specifically, ``D-W'', ``D-Y'' denote ``DBpedia~\cite{DBpedia}-WikiData~\cite{Wikidata}'', ``DBpedia-YAGO~\cite{Yago}'', respectively. ``EN-DE'' and ``EN-FR' denote two cross-lingual datasets, both of which are sampled from DBpedia. Each sampled KG has around 15,000 entities. The entity degree distributions in ``V1'' datasets are similar to those in the original KGs, while the average degree in ``V2'' datasets are doubled. For detailed statistics of the datasets, please refer to \cite{OpenEA}.

\subsection{Empirical Comparisons}

\begin{table*}[!t]
	\centering
	\caption{Results of ablation study based on the best-performing model RDGCN, on V1 datasets.}
	\resizebox{.82\textwidth}{!}{
		\begin{tabular}{lcccccccccccc}
			\toprule
			\multirow{2}{*}{Models} & \multicolumn{3}{c}{EN-FR} & \multicolumn{3}{c}{EN-DE} & \multicolumn{3}{c}{D-W} & \multicolumn{3}{c}{D-Y}\\
			\cmidrule(lr){2-4} \cmidrule(lr){5-7} \cmidrule(lr){8-10} \cmidrule(lr){11-13}
			& H@1 & H@5 & MRR & H@1 & H@5 & MRR & H@1 & H@5 & MRR & H@1 & H@5 & MRR \\ \midrule
			Full & \textbf{.775} & \textbf{.868} &\textbf{.817} &\textbf{ .846} &\textbf{.908} &\textbf{.874} & \textbf{.527} &\textbf{.671} &\textbf{.592} & \textbf{.941} &\textbf{.972} &\textbf{.955} \\
			Partial & .771 & .863 & .813 & .840 &.900 &.871 & .523 &.669 &.590 & .936 &.971 &.952\\
			Basic & .755 & .853 & .799 & .827 &.895 &.858 & .512 &.656 &.578 & .931 &.969 &.948 \\
			Original & .755 & .854 & .800 & .830 & .895 &.859 & .515 & .669 & .584 &.931 &.969 &.949 \\
			\bottomrule
	\end{tabular}}
	\label{tab:ablation_study}
\end{table*}

The main results on V1 datasets are shown in Table~\ref{tab:main_results_v1}. Although the performance of four baseline models varied from different datasets, all of them gained improvement with NeoEA. This demonstrates that aligning the neural ontology is beneficial for all four kinds of EEA models. 

On the other hand, we find that the performance improvement on SEA and RDGCN was more significant than that on BootEA and RSN, as the latter two are not typical EEA models. BootEA has a sophisticated bootstrapping procedure, which may be challenging to be injected with NeoEA. RSN tries to capture long-term dependencies. The complicated objective may conflict with NeoEA more or less. Even though we still observe relatively significant improvement on some datasets (e.g., EN-DE and D-Y). Therefore, we believe their performance can be refined through a joint hyper-parameter turning with NeoEA, which we leave to future work.

The results on V2 datasets (i.e., the denser and simpler ones) are presented in Appendix~\ref{app:expr}. Briefly, the improvement is relatively smaller than that on V1 datasets because the average entity degree is doubled. However, NeoEA still outperformed all the baselines. Those results empirically proved its effectiveness.

\subsection{The Scalability and Efficiency of NeoEA}
We also conducted experiment on the OpenEA 100K datasets to evaluate performance of NeoEA on larger KGs. We used a single TITAN RTX for training, and SEA (the fastest model) as the basic EEA model. In theory, NeoEA does not have multiple GCN/GAT layers nor the pair-wise similarity estimation on whole graphs. The embedding distributions are also obtained from the sampled KG. Therefore, NeoEA is applicable to large-scale datasets.

The results are shown in Figure \ref{fig:100k}, from which we can find that the time for training one epoch (Figure \ref{fig:100k}a) was not evidently increased, especially considering the cost of switching the optimizers. On the other hand, the overall training time was nearly doubled (Figure \ref{fig:100k}b), caused by the adversarial training procedure. Even thought the loss converged more slowly in NeoEA, we should notice that the overall training time was much less than that of complicated EEA models, e.g., BootEA (35,000+ seconds). In Figures \ref{fig:100k}c and \ref{fig:100k}d, we can find that the improvement for Hits@1 and MRR was significant, especially on the D-Y dataset that took longest time in training.

\subsection{The Necessity of Conditioned Neural Axioms}

We designed an experiment to verify some claims in Section~\ref{sec:Neo}. We choose the best-performing model RDGCN as our baseline. As shown in Table~\ref{tab:ablation_study}, ``Full'' denotes NeoEA with the full set of neural axioms. ``Partial'' denotes NeoEA without the conditional triple axioms. We further removed the conditional entity axioms from ``Partial'' to construct ``Basic'', and the last one, ``Original'', denotes the original EEA model. 

\begin{figure}[t]
	
	\centering
	\hfill
	\begin{subfigure}[h]{0.48\linewidth}
		\includegraphics[width=\textwidth]{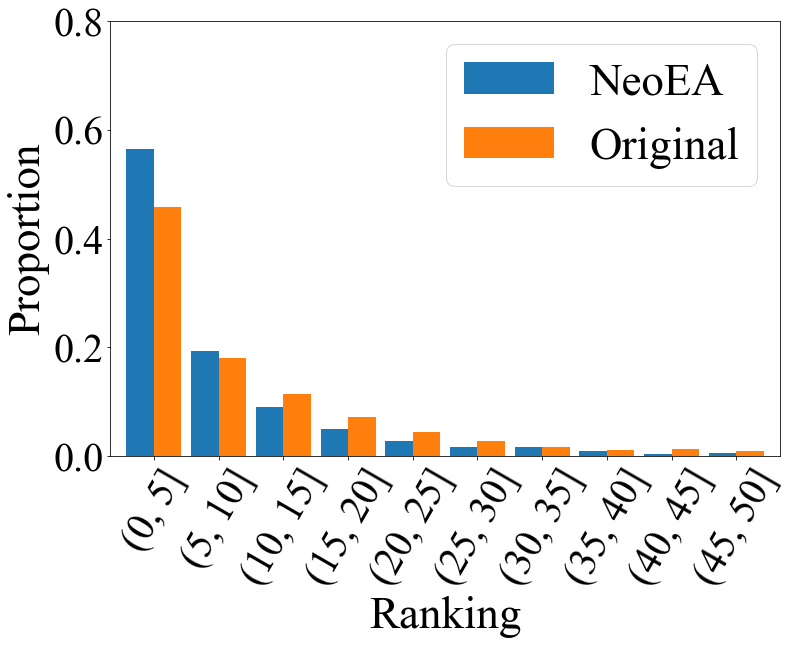}
		\caption{rankings of long-tail entities}\label{fig:long_tail}
	\end{subfigure}
	\hfill
	\begin{subfigure}[h]{0.48\linewidth}
		\includegraphics[width=\textwidth]{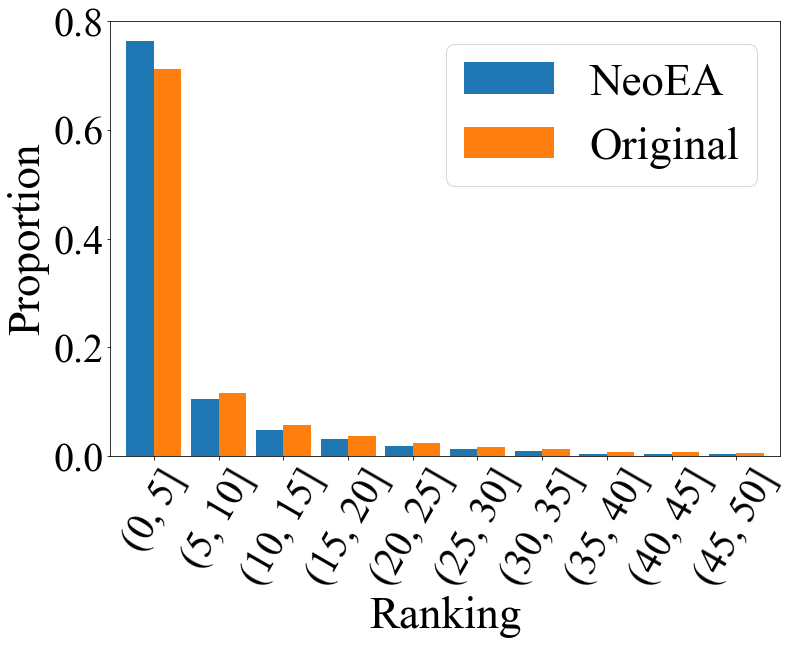}
		\caption{rankings of popular entities}\label{fig:popular}
	\end{subfigure}
	
	\caption{Normalized histograms of alignment rankings on EN-FR, V1 dataset.}
	\label{fig:rank_hist}
\end{figure}

It is clear that aligning the basic axiom was less effective or even harmful to the model. This result empirically demonstrates our assumption that the uniformity property of the learned entity embeddings will make the embedding distribution alignment meaningless. On the other hand, aligning only a part of conditional axioms $\sA_{E_h|r_i}, \sA_{E_t|r_i}$ that describe entity embedding distributions conditioned on relation embeddings was significantly helpful for the model. Also, the additional improvement was observed with the full conditional axioms. 

Note that the improvement from ``Partial'' to ``Full'' was not as significant as that from "Basic" to ``Partial''. This is because the triple neural axioms mainly describe the relationships between different relations (see Appendix~\ref{app:neoea_proof}). Due to the sampling strategy of the datasets, the number of relations is relatively small. 
Few correlated relation pairs exist in the datasets, resulting in limited improvement.

\subsection{Further Analysis of the Bound}

We have shown that the embedding discrepancy between each underlying aligned pair is bounded by $\epsilon$ associated with $\lambda$ in Section~\ref{sec:EEA}. In this section, we provide empirical statistics to verify this point. To this end, we manually split the entities into two groups: (1) long-tail entities, which have at most two neighbors that do not belong to known aligned pairs; (2) popular entities, the remaining. 

We draw the histograms of alignment rankings w.r.t. respective groups based on the EEA model SEA. From Figure~\ref{fig:rank_hist}, we can find that the proportion of the inexact alignments (i.e., ranking $>5$) for long-tail entities is evidently larger than that of popular entities, especially for the bins $(5,20]$. This observation verified that the long-tail entities are less constrained compared to those popular entities in EEA problem. 

Furthermore, with NeoEA, the rankings of those long-tail entities were improved more significantly than those of popular entities, which demonstrates that NeoEA indeed tightened the representation discrepancy of those less restrained entities. 

We report the average ranking improvement on four V1 datasets in Figure~\ref{fig:rank_improvement}, which shows consistent results. It is worth noting that the ranking improvement for long-tail entities is more than twice as larger as that for popular entities, except the D-W dataset.  

\begin{figure}[t]
	\centering
	\includegraphics[width=.98\linewidth]{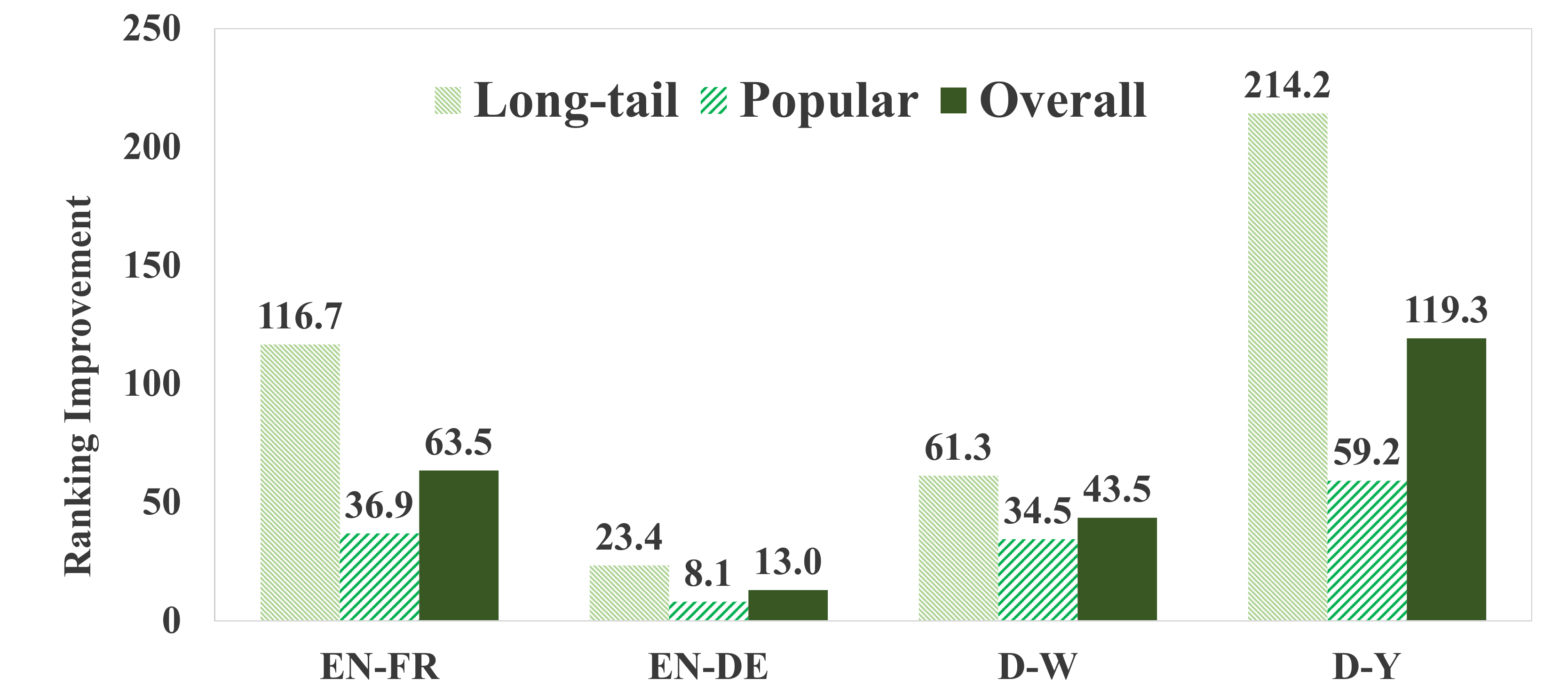}
	\caption{The average ranking improvement on different groups of entities.}
	\label{fig:rank_improvement}
\end{figure}

\section{Conclusion and Future Work}
In this paper, we proposed a new approach to learn KG embeddings for entity alignment. We proved its expressiveness theoretically and demonstrated its efficiency by conducting extensive experiments on the latest benchmarks. We observed that four state-of-the-art EEA methods gained evident benefits with NeoEA. Moreover, we showed that the proposed conditional neural axioms are the key to improve the performance of current EEA methods. For future work, we plan to study how to extend NeoEA with realistic ontology knowledge for further improvement.

\bibliography{references_with_page_number}
\bibliographystyle{ACM-Reference-Format}

\clearpage
\appendix
\section{Proof for Proposition~\ref{proposition:expressiveness}}
\label{app:neoea_proof}
\setcounter{propositionEnd}{2}
\begin{propositionEnd}[Expressiveness]
	\label{propositionEnd:expressiveness}
	Aligning the conditional neural axioms minimizes the embedding discrepancy of two KGs at the ontology level.
\end{propositionEnd}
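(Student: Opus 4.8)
The plan is to establish, for each principal family of OWL~2 EL axioms, that driving the corresponding conditional-neural-axiom loss to zero forces the ontology-level constraint encoded by that axiom to coincide across the two KGs. I would organize the proof by axiom type: first the conditional entity axioms $\sA_{E_h|r_i}$ and $\sA_{E_t|r_i}$, which I claim capture the ``Object Property Domain'' and ``Object Property Range'' axioms; then the conditional triple axiom $\sA_{E_{h,t}|r_i}$, which I claim captures relationships between relations such as ``SubObjectPropertyOf''. Each part reduces to the same distributional argument, so the bulk of the work is setting up that argument once and then instantiating it.

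For the domain case I would start from the equivalence already derived before the statement: an ``Object Property Domain'' axiom placing the head entities of $r_i$ in a class $c$ is equivalent to $\E_{\sA_{E_h|r_i}^1}[f_c(\rve)] = \E_{\sA_{E_h|r_i}^2}[f_c(\rve)] = 1$ for a membership classifier $f_c$. Since $f_c$ is unavailable, it is replaced by a learnable $1$-Lipschitz critic $f_{c^\prime}(\rve\,|\,\rvr_i)$, and the key step is to invoke Kantorovich--Rubinstein duality: the supremum of $\E_{\sA_{E_h|r_i}^1}[f_{c^\prime}] - \E_{\sA_{E_h|r_i}^2}[f_{c^\prime}]$ over $1$-Lipschitz critics equals the Wasserstein distance $W(\sA_{E_h|r_i}^1, \sA_{E_h|r_i}^2)$. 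Hence maximizing the loss in Equation~(\ref{eq:cwd}) estimates this Wasserstein distance, and minimizing it over the embeddings drives the distance to zero, forcing the two conditional head distributions to coincide. Once they coincide, every bounded membership statistic -- in particular the implicit domain constraint -- must agree across the two KGs. The range case for $\sA_{E_t|r_i}$ is symmetric.

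For the triple axioms I would argue that $\sA_{E_{h,t}|r_i}$ additionally encodes the joint head--tail geometry, so aligning it preserves inclusions between relations. Concretely, if $r_1$ is a sub-relation of $r_2$ then every $r_1$-triple is an $r_2$-triple, which at the distributional level reads $\mathrm{supp}(\sA_{E_{h,t}|r_1}) \subseteq \mathrm{supp}(\sA_{E_{h,t}|r_2})$, the containment illustrated in Figure~\ref{fig:diste}. Applying the same Wasserstein-critic argument to match $\sA_{E_{h,t}|r_i}^1$ with $\sA_{E_{h,t}|r_i}^2$ preserves these support-containment relations across KGs, so the cross-KG ``SubObjectPropertyOf'' knowledge becomes aligned; this is the sense in which the triple axioms describe relationships between relations.

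I expect the main obstacle to be making ``at the ontology level'' precise and bridging the continuous Wasserstein objective with the discrete, logical nature of the axioms. The delicate point is that the true classifier $f_c$ is an indicator and need not be $1$-Lipschitz, so I must either impose a margin or smoothness assumption under which a Lipschitz critic $f_{c^\prime}$ approximates $f_c$ to arbitrary accuracy, or argue directly that zero Wasserstein distance already forces equality of every bounded membership statistic and then recover the axiom-level conclusion from distributional equality alone. Controlling the approximation error from replacing $f_c$ by the learned $f_{c^\prime}$, uniformly over the relation set, is where the real care is needed.
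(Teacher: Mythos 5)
Your core mechanism matches the paper's exactly: reduce each OWL~2 EL axiom to a logical rule, recast that rule as a statement about a conditional embedding distribution, and align the two KGs' versions of that distribution with the Wasserstein critic. Your treatment of ObjectPropertyDomain is the paper's Section~\ref{sec:Neo} argument (classifier $f_c$, expectation equal to $1$, replacement by a learnable critic $f_{c^\prime}$), and you actually go beyond the paper by invoking Kantorovich--Rubinstein duality explicitly and by flagging that the true $f_c$ is an indicator and need not be $1$-Lipschitz --- a real issue the paper never acknowledges, let alone resolves.

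The gap is coverage, and it matters here because the paper's proof \emph{is} essentially an exhaustive enumeration over axiom types. Beyond Domain, Range, and SubObjectPropertyOf, the paper handles ReflexiveObjectProperty, IrreflexiveObjectProperty, FunctionalObjectProperty, InverseFunctionalObjectProperty, SymmetricObjectProperty, AsymmetricObjectProperty, EquivalentObjectProperties, DisjointObjectProperties, InverseObjectProperties, and TransitiveObjectProperty, each reduced to a rule whose satisfaction is determined by the conditional triple distribution $\sA_{E_{(h,t)}|r_i}$ (e.g., reflexivity: $\forall (e, r_i, e^\prime)\in \gT$, $(e, r_i, e)\in\gT$; transitivity, symmetry, and functionality are analogous). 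This also exposes a flaw in your framing: you present the triple axiom $\sA_{E_{h,t}|r_i}$ as encoding ``relationships between relations'' such as sub-properties, but in the paper it is equally the carrier of \emph{single-relation} structural properties, and these cannot be recovered from your entity-level axioms $\sA_{E_h|r_i}$ and $\sA_{E_t|r_i}$, which discard exactly the head--tail coupling that reflexivity, symmetry, functionality, and transitivity constrain. Your one-template-then-instantiate strategy would extend to all of these cases without modification, so the omission is repairable, but as written your proof establishes the proposition for only three of the roughly thirteen axiom types the paper's proof runs through.
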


\begin{proof}
	We split the proofs according to the types of axioms in OWL2 EL \cite{OWL2-EL}:

	\textbf{ObjectPropertyDomain, ObjectPropertyRange.} The proof for ObjectPropertyDomain has been presented in Section~\ref{sec:Neo}, and that for ObjectPropertyRange is similar.

	\textbf{ReflexiveObjectProperty, IrreflexiveObjectProperty.} If we say that a relation $r_i$ is reflexive, it must satisfy
	\begin{align}
		\forall (e, r_i, e^\prime)\in \gT,\ (e, r_i, e) \in \gT,
	\end{align}
	which means each head entities of $r_i$ must be connected by $r_i$ to itself. The above rule suggests that we can align the underlying \emph{reflexive} knowledge by minimizing the discrepancy between triple distributions conditioned on relation $r_i$, i.e., aligning $\sA_{E_{(h,t)}|r_i}^1$ with $\sA_{E_{(h,t)}|r_i}^2$. The similar to IrreflexiveObjectProperty axiom.

	\sloppy \textbf{FunctionalObjectProperty, InverseFunctionalObjectProperty.} We first introduce FunctionalObjectProperty axiom. It compels each head entity $e$ connected by relation $r_i$ to have exactly one tail entity, implying the following rule:
	\begin{align}
		\forall (e, r_i, e^\prime) \in \gT, \forall e''\in \gE,\ (e, r_i, e'') \notin \gT.
	\end{align}
	The above rule is also related to the triple distribution conditioned on $r_i$. The similar to the InverseFunctionalObjectProperty axiom.

	\textbf{SymmetricObjectProperty, AsymmetricObjectProperty.} The first axiom can state a relation $r_i$ is symmetric, that is,
	\begin{align}
		\forall (e, r_i, e^\prime) \in \gT,\ (e^\prime, r_i, e) \in \gT.
	\end{align}
	It is also related to the triple distributions referred to $r_i$, implying that aligning $\sA_{E_{(h,t)}|r_i}^1$ with $\sA_{E_{(h,t)}|r_i}^2$ is sufficient to minimize the difference. The similar to the AsymmetricObjectProperty axiom.

    \textbf{SubObjectPropertyOf, EquivalentObjectProperties, DisjointObjectProperties and InverseObjectProperties.} We show that these axioms also define rules related to triple distributions conditioned on relations. We start from SubObjectPropertyOf, which can state that relation $r_i$ is a sub-property of relation $r_j$ (e.g., ``hasDog'' is one of the sub-properties of ``hasPet''). We formulate it as
    \begin{align}
        \forall (e, r_i, e^\prime) \in \gT,\ (e^\prime, r_j, e) \in \gT.
    \end{align}
    To align the potential SubObjectPropertyOf axioms between two KGs, we can respectively align $(\sA_{E_{(h,t)}|r_i}^1,\sA_{E_{(h,t)}|r_i}^2)$ and $(\sA_{E_{(h,t)}|r_j}^1,\sA_{E_{(h,t)}|r_j}^2)$, such that the joint one $(\sA_{E_{(h,t)}|r_i,r_j}^1,\sA_{E_{(h,t)}|r_i,r_j}^2)$ will also be aligned.
    
    Similarly, if $r_i$ and $r_j$ are equivalent, we can interpret the axiom as
    \begin{align}
    \resizebox{.95\linewidth}{!}{$
        \forall (e, r_i, e^\prime) \in \gT,\ (e^\prime, r_j, e) \in \gT;\quad \forall (e, r_j, e^\prime) \in \gT,\ (e^\prime, r_i, e) \in \gT.
        $}
    \end{align}
    If they are disjoint, the corresponding rule will be
    \begin{align}
        \forall (e, r_i, e^\prime) \in \gT,\ (e, r_j, e^\prime) \notin \gT.
    \end{align}
    If they are inverse to each other, the rule is
    \begin{align}
        \forall (e, r_i, e^\prime) \in \gT,\ (e^\prime, r_j, e) \in \gT.
    \end{align}\\
    
    \textbf{TransitiveObjectProperty.} We show that this axiom is also related to triple distributions conditioned on $r_i$. Supposed that a relation $r_i$ is transitive, then one can derive the following rule:
    \begin{align}
        \forall (e, r_i, e^\prime) \in \gT \& (e^\prime, r_i, e'') \in \gT,\ (e, r_i, e'') \in \gT,
    \end{align}
    which means we can align the potential TransitiveObjectProperty axioms via minimizing the distribution discrepancy between $\sA_{E_{(h,t)}|r_i}^1$ and $\sA_{E_{(h,t)}|r_i}^2$.
\end{proof}

\section{Results on V2 Datasets}
The results on V2 datasets are shown in Table \ref{tab:main_results_v2}. Although entities have doubled number of neighbors in V2 datasets, all baseline models still gained significant improvement with NeoEA.
\label{app:expr}

\begin{table*}[!t]
	\centering
	\caption{Results on V2 datasets (5-fold cross-validation).}
	\resizebox{.82\textwidth}{!}{
		\begin{tabular}{lcccccccccccc}
			\toprule
			\multirow{2}{*}{Models} & \multicolumn{3}{c}{EN-FR} & \multicolumn{3}{c}{EN-DE} & \multicolumn{3}{c}{D-W} & \multicolumn{3}{c}{D-Y}\\
			\cmidrule(lr){2-4} \cmidrule(lr){5-7} \cmidrule(lr){8-10} \cmidrule(lr){11-13}
			& H@1 & H@5 & MRR & H@1 & H@5 & MRR & H@1 & H@5 & MRR & H@1 & H@5 & MRR \\ \midrule
			RSN \cite{RSN}& .579 & .759 & .662 & .791 & .890 & .837 & .723 & .854 & .782 & .933 & .974 & .951 \\
			RSN\,+\,NeoEA & \textbf{.583} & \textbf{.760} & \textbf{.666} & \textbf{.794} & \textbf{.892} & \textbf{.839} & \textbf{.729} & \textbf{.858} & \textbf{.787} & \textbf{.935} & \textbf{.976} & \textbf{.953}\\
			\midrule
			SEA \cite{SEA} & .360 &.651 &.494 & .606 & .779 & .687 & .567 & .770 & .660 & .899 & .950 & .923 \\
			SEA\,+\,NeoEA & \textbf{.375} &\textbf{.666} & \textbf{.508} & \textbf{.637} & \textbf{.800} & \textbf{.712} & \textbf{.588} & \textbf{.784} & \textbf{.677} & \textbf{.917} & \textbf{.959} & \textbf{.936}\\
			\midrule
			BootEA \cite{BootEA} & .660 & .850 & .745 & .833 & .912 & .869 & .821 & \textbf{.926} & .867 & \textbf{.958} & \textbf{.984} & \textbf{.969} \\
			BootEA\,+\,NeoEA & \textbf{.665} & \textbf{.853} & \textbf{.749} & \textbf{.834} & \textbf{.916} & \textbf{.870} & \textbf{.822} & \textbf{.926} & \textbf{.869} & \textbf{.958} & \textbf{.984} & \textbf{.969}\\
			\midrule
			RDGCN \cite{RDGCN}& .847 & .919 & .880 &.833 & .891 & .860 & .623 & .757 & .684 & .936 & .966 & .950 \\
			RDGCN\,+\,NeoEA & \textbf{.864} & \textbf{.933} &\textbf{.896} & \textbf{.849} & \textbf{.902} & \textbf{.874} & \textbf{.632} & \textbf{.760} & \textbf{.690} & \textbf{.940} & \textbf{.970} & \textbf{.953}\\
			\bottomrule
	\end{tabular}}
	\label{tab:main_results_v2}
\end{table*}

\end{document}